	\tikzset{
	pil/.style={
		->,
		thick,
		shorten <=2pt,
		shorten >=2pt,}
}
\theoremstyle{remark}
\newenvironment{remark}
{\pushQED{\qed}\remarkx}
{\popQED\endremarkx}
\theoremstyle{definition}
\newtheorem{defn}{Definition}
\newtheorem{problem}{Problem}
\newtheorem*{problem*}{Problem}
\theoremstyle{plain}
\newtheorem{theorem}{Theorem}
\newtheorem{lemma}{Lemma}
\newtheorem{prop}{Proposition}
\newcommand{\cmmnt}[1]{}
\newcommand{\scalemath}[2]{\scalebox{#1}{\mbox{\ensuremath{\displaystyle #2}}}}
\title{\LARGE \bf
Resilient source seeking with robot swarms 
}
\author{Antonio Acuaviva, Jesus Bautista, Weijia Yao, Juan Jimenez, Hector Garcia de Marina 
	\thanks{A. Acuaviva and J. Jimenez are with the Department of Computer Architecture and Automation, Universidad Complutense de Madrid, Spain. Jesus Bautista and H. Garcia de Marina are with the Department of Computer Engineering, Automation, and Robotics, and the Research Centre for Information and Communication Technologies (CITIC-UGR), University of Granada, Granada, Spain. Weijia Yao is with the School of Robotics, Hunan University. Corresponding author e-mail: {\tt\small hgdemarina@ugr.es}. The work of H.G. de Marina is supported by the ERC Starting Grant \emph{iSwarm} 101076091 and the grant Ramon y Cajal RYC2020-030090-I from the Spanish Ministry of Science.}%
}
\begin{document}

\maketitle
\thispagestyle{empty}
\pagestyle{empty}

\begin{abstract}
We present a solution for locating the source, or maximum, of an unknown scalar field using a swarm of mobile robots. Unlike relying on the traditional gradient information, the swarm determines an ascending direction to approach the source with arbitrary precision. The ascending direction is calculated from field strength measurements at the robot locations and their relative positions concerning the swarm centroid. Rather than focusing on individual robots, we focus the analysis on the density of robots per unit area to guarantee a more resilient swarm, i.e., the functionality remains even if individuals go missing or are misplaced during the mission. We reinforce the algorithm's robustness by providing sufficient conditions for the swarm shape so that the ascending direction is almost parallel to the gradient. The swarm can respond to an unexpected environment by morphing its shape and exploiting the existence of multiple ascending directions. Finally, we validate our approach numerically with hundreds of robots. The fact that a large number of robots with a generic formation always calculate an ascending direction compensates for the potential loss of individuals.
\end{abstract}


\section{Introduction}
\subsection{Aim and objectives}
The source seeking of a scalar field is regarded as one of the fundamental problems in swarm robotics due to its enormous potential for contemporary challenges \cite{yang2018grand,brambilla2013swarm}. Indeed, the ability to detect and surround sources of chemicals, pollution, and radio signals effectively with robot swarms will enable persistent missions in vast areas for environmental monitoring, search \& rescue, and precision agriculture operations \cite{ogren2004cooperative, kumar2004robot, mcguire2019minimal, li2006moth,twigg2012rss}. 

Robot swarms are among the most promising multi-robot systems because of their high-resilience potential, i.e., to preserve functionality against unexpected adverse conditions and unknown, possibly unmodeled, disturbances. In this regard, we aim to develop a source-seeking algorithm for robot swarms with guarantees that the robots reach the source even if (disposable) individuals go missing during the mission. The algorithm guarantees an ascending direction rather than the gradient in order to guide the centroid of the robot swarm to the source, i.e., all the individual robots track such a direction. Its calculation does not require a specific formation; in fact, by morphing its shape, the swarm adapts to the environment, and missing robots are not an issue since the \emph{resultant} formation is still valid. Calculating the ascending direction requires the robots to measure the strength of the scalar field at their current positions and then share this information with a \emph{computing-unit} robot. The computing unit does not have any specific requirements, and, in fact, redundant computing units can be employed to enhance the robustness of the swarm. Additionally, the robots must be aware of their centroid. The main objective of this paper is to study and determine the properties of the proposed ascending direction and its application to robot swarms.

\subsection{Literature review}
The following aims to assess the pros and cons of different algorithms, considering that robot systems are tailored to specific missions and requirements, e.g., robot dynamics, traveled distance, or communication topologies. 

The widely-used approaches focus on estimating the gradient and the Hessian of an unknown scalar field. In works like \cite{rosero2014cooperative, barogh2017cooperative}, the gradient estimation occurs in a distributed manner, where each robot calculates the different gradients at their positions. A common direction for the team is achieved through a distance-based formation controller that maintains the cohesion of the robot swarm. It is worth noting that this method fails when a subset of neighboring robots adopt a \emph{degenerate} shape, like a line in a 2D plane. Alternatively, in studies like \cite{brinon2015distributed, brinon2019multirobot}, the gradient and Hessian are estimated at the centroid of a circular formation consisting of unicycles. However, it is relatively rigid due to its mandatory circular formation. Such a formation of unicycles is also present in \cite{fabbiano2016distributed}, although robots are not required to measure relative positions but rather relative headings. In contrast to our algorithm, which calculates an ascending direction instead of the gradient, our robot team enjoys greater flexibility concerning formation shapes while still offering formal guarantees for reaching the source. 


Extremum seeking is another extensively employed technique, as discussed in \cite{li2020cooperative, cochran2009nonholonomic}. In these studies, robots, which could potentially be just one, with nonholonomic dynamics execute periodic movements to facilitate a gradient estimation. However, when multiple robots are involved, there is a necessity for the exchange of estimated parameters. Furthermore, the eventual trajectories of the robots are often characterized by long and intricate paths with abrupt turns. In contrast, our algorithm generates smoother trajectories, albeit requiring more than two robots.

All the algorithms above require communication among robots sharing the strength of the scalar field. Nonetheless, the authors in \cite{al2021distributed} offer an elegant solution involving the principal component analysis of the field that requires no sharing of the field strength. However, the centroid of the swarm is still necessary. It is worth noting that the initial positions of the robots determine the (time-varying) formation during the mission, and it is not under control. In comparison, although our algorithm requires the communication of the field strength among the robots, it seems to fit well with formation control laws resulting in more predictable trajectories.

\subsection{Organization}
The article is organized as follows. In Section \ref{sec: pre}, we introduce notations and assumptions on the considered scalar field and formally state our source-seeking problem. We continue in Section \ref{sec: tools} on how to calculate an ascending direction and analyze its sensitivity concerning the swarm and the observability of the gradient. In Section \ref{sec: sianalysis}, we analyze the usage of the ascending direction with robots modeled as single integrators. We validate our theoretical findings in Section \ref{sec: sce} involving hundreds of robots. Finally, we end this article in Section \ref{sec: con} with conclusions and upcoming results.

\section{Preliminaries and problem formulation}
\label{sec: pre}

Consider a team of $N \in \mathbb{N}$, with $N > m$ robots, where the position of the $i$-th robot in the Cartesian space is represented by $p_i \in \mathbb{R}^m$, where $m \in \{2,3\}$, $i \in \{1,\dots,N\}$. We define $p\in\mathbb{R}^{mN}$ as the stacked vector of the positions of all robots in the team. We will focus on the single integrator dynamics, i.e., the robot $i$ is modeled by
\begin{equation}
\label{eq: sid}
\dot p_i = u_i,
\end{equation}
where $u_i\in\mathbb{R}^m$ is the guiding velocity as control action. We define the centroid of the swarm as $p_c := \frac{1}{N}\sum_{i=1}^{N}p_i$; therefore, we can write $p_i = p_c + x_i$, where $x_i\in\mathbb{R}^m$ for all $i\in\{1,\dots,N\}$ describes how robots are spread around their centroid as it is shown in Figure \ref{fig: deployment}.

\begin{defn}
The \emph{deployment} \label{deployment} of a swarm is defined as the stacked vector $x := \begin{bmatrix}x_1^T, \dots, x_N^T\end{bmatrix}^T \in \mathbb{R}^{mN}$. We say a deployment is \emph{non-degenerate} if the elements in $x$ span $\mathbb{R}^m$.
\end{defn}

\begin{figure}
\centering
\includegraphics[trim={3cm 8.5cm 0 9.5cm}, clip, width=0.75\columnwidth]{./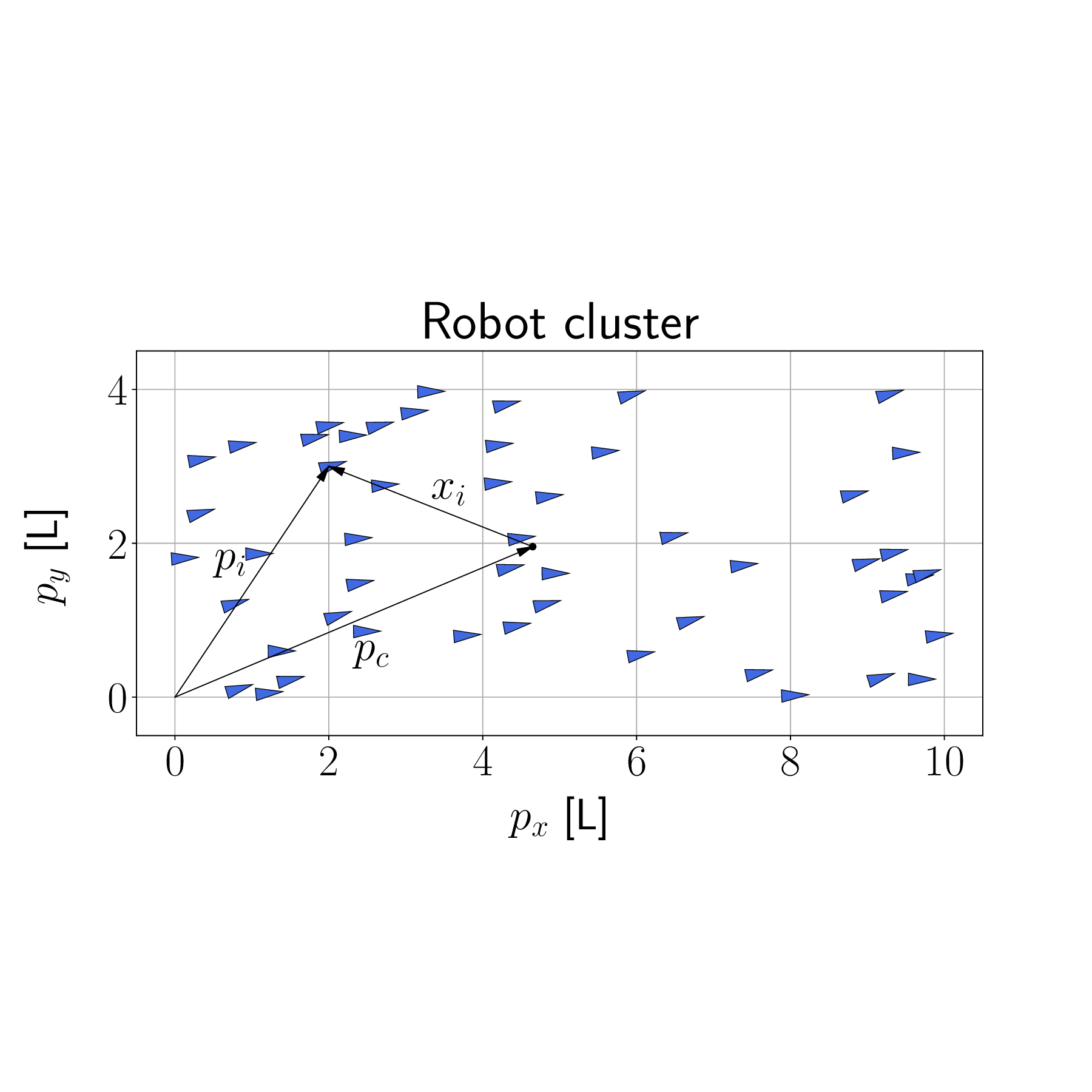}
\caption{Deployment $x$ of a robot swarm with centroid at $p_c$.}
\label{fig: deployment}
\end{figure}
Therefore, a necessary condition for a non-degenerate deployment is $N > m$. The strength of a signal throughout the space can be described by a scalar field.

\begin{defn} \label{signal} A \emph{signal} is a scalar field $\sigma: \mathbb{R}^m \to \mathbb{R}^+$ which is $C^2$ and all its partial derivatives up to second order are bounded globally. Also, $\sigma$ has only one maximum at $p_\sigma \in \mathbb{R}^m$, i.e., the source of the signal, and its gradient at $a\in\mathbb{R}^m$ satisfies $\nabla\sigma(a) \neq 0 \iff a \neq p_\sigma$ and $\lim_{|a|\to\infty}\sigma(a) = 0$.
\end{defn}

Our definition of signal fits plenty of models in our physical world, e.g., Gaussian distributions and the power law $\frac{1}{r^{2}}$, smoothly modified at the origin to avoid singularities.


In this paper, the gradient is defined as a column vector, i.e., $\nabla\sigma(\cdot) \in\mathbb{R}^m$, and according to our definition of a signal:
\begin{equation} \label{eq_grad_hess_bound}
\|\nabla\sigma(a)\| \leq K \quad \text{and} \quad \|H_\sigma(a)\| \leq 2M, \, \forall a\in\mathbb{R}^m,
\end{equation}
where $K, M\in\mathbb{R}^+$, and $H_\sigma$ is the Hessian of the scalar field $\sigma$, i.e., $H_\sigma(a)\in\mathbb{R}^{m\times m}$; thus, from the Taylor series of $\sigma$ at $a\in\mathbb{R}^m$ \cite[Theorem 5.15]{WalterRudin1976Principles}, it follows that
\begin{equation}
\label{eq: stay}
| \sigma(a) - \sigma(b) - \nabla\sigma(a)^T(a-b) | \leq M \|a -b\|^2, \, \forall b\in\mathbb{R}^m.
\end{equation}

Now, we are ready to define the source-seeking problem.
\begin{problem}[source seeking]
\label{prob: ss}
Given an unknown signal $\sigma$ and a constant $\epsilon \in \mathbb{R}^+$, find control actions for (\ref{eq: sid}) such that $\|p_c(t) - p_\sigma\| < \epsilon, \forall t \geq T$ for some finite time $T\in\mathbb{R}^+$, where $p_c$ is the centroid of the swarm and $p_\sigma$ is the source of the signal.
\end{problem}

\section{The ascending direction}
\subsection{Discrete robots}
\label{sec: tools}

The authors in \cite{brinon2015distributed} show that the following expression for a circular formation of $N\geq 3$ robots that are equally \emph{angle-spaced}
\begin{equation}
\hat\nabla\sigma(p_c) := \frac{2}{ND^2}\sum_{i=1}^N \sigma(p_i)(p_i - p_c),
\label{eq: lara}
\end{equation}
is an estimation of the gradient at the center $p_c$ of the circumference with radius $D$. However, it is unclear how well (\ref{eq: lara}) approximates the gradient for any generic deployment other than the circle; for example, what is its sensitivity to misplaced robots? In fact, without apparently major changes, the expression (\ref{eq: lara}) can be extended significantly by admitting any generic deployment $x$, i.e., we will prove that the vector
\begin{align}
L_\sigma(p_c,x) &:= \frac{1}{ND^2}\sum_{i=1}^N \sigma(p_i)(p_i - p_c) \nonumber \\ &= \frac{1}{ND^2}\sum_{i=1}^N \sigma(p_c + x_i)x_i, \label{eq: Ls}
\end{align}
is an ascending direction at $p_c$ for some mild conditions, where now $D = \operatorname{max}_{1\leq i\leq N}{\|x_i\|}$. We note that $L_\sigma$ is calculated by the computing unit. We remark that (\ref{eq: Ls}), unlike (\ref{eq: lara}), is a function of a generic deployment $x$, and that the ascending direction is not necessarily parallel to the gradient. We will see that such a property is an advantage to maneuver the swarm by just morphing $x$, or to guarantee an ascending direction in the case of misplaced robots.

Let us approximate $\sigma(p_c + x_i)$ in (\ref{eq: Ls}) to a two-term Taylor series for \emph{small} $\|x_{i}\|$, i.e., $\sigma(p_c + x_i) \approx \sigma(p_c) + \nabla\sigma(p_c)^Tx_i$, so that $L_\sigma(p_c,x) \approx L_\sigma^0(p_c,x) + L_\sigma^1(p_c,x)$, where $L^0_\sigma(p_c,x) = \frac{1}{ND^2}\sum_{i=1}^N \sigma(p_c) x_i = 0$ since $\sum_{i=1}^N x_i = 0$, and
\begin{equation}
L^1_\sigma(p_c,x) = \frac{1}{ND^2}\sum_{i=1}^N \left(\nabla\sigma(p_c)^Tx_i\right)x_i. \label{eq: L1.1}
\end{equation}

Although the important property of $L^1_\sigma$ is its direction, the factor $\frac{1}{ND^2}$ makes $L^1_\sigma$ have the same physical units as the gradient. The direction of the vector $L^1_\sigma$ is interesting because, if $x$ is non-degenerate, then it is an \emph{always-ascending} direction, in the sense that no more conditions are necessary, towards the source $p_\sigma$ at the centroid $p_c$. 

\begin{lemma}
\label{le: l1}
$L^1_\sigma(p_c,x)$ is an always-ascending direction at $p_c$ towards the maximum $p_\sigma$ of the scalar field $\sigma$ if the deployment $x$ is non-degenerate and $p_c \ne p_\sigma$.
\end{lemma}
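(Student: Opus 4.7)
The plan is to show that $L^1_\sigma(p_c,x)$ has strictly positive inner product with the gradient $\nabla\sigma(p_c)$, which by definition makes it an ascending direction at $p_c$. Since $p_c \neq p_\sigma$, the signal definition gives $g := \nabla\sigma(p_c) \neq 0$, so this inner product condition is the right notion of ``ascending.''

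First, I would rewrite the sum in \eqref{eq: L1.1} in matrix form. Using the scalar identity $(g^T x_i) x_i = (x_i x_i^T) g$, one obtains
\begin{equation*}
L^1_\sigma(p_c,x) = \frac{1}{ND^2}\left(\sum_{i=1}^N x_i x_i^T\right) g,
\end{equation*}
so the ``deployment matrix'' $C := \sum_{i=1}^N x_i x_i^T \in \mathbb{R}^{m\times m}$ governs the entire geometry.

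Next, I would take the inner product with $g$ to get
\begin{equation*}
g^T L^1_\sigma(p_c,x) = \frac{1}{ND^2} g^T C g = \frac{1}{ND^2}\sum_{i=1}^N (g^T x_i)^2 \geq 0.
\end{equation*}
This is the key observation: $C$ is a sum of rank-one positive semidefinite terms, hence $C \succeq 0$, and the inner product is a sum of squares.

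Finally, I would rule out equality. If $g^T L^1_\sigma = 0$, then $g^T x_i = 0$ for every $i$, so $g$ is orthogonal to every element of $\{x_1,\dots,x_N\}$. By the non-degeneracy hypothesis these vectors span $\mathbb{R}^m$, forcing $g = 0$, which contradicts $p_c \neq p_\sigma$. Therefore $g^T L^1_\sigma(p_c,x) > 0$, and $L^1_\sigma(p_c,x)$ is an ascending direction.

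There is essentially no obstacle here beyond the algebraic rearrangement: once $L^1_\sigma$ is recognized as $\tfrac{1}{ND^2} C g$ with $C \succeq 0$, the non-degeneracy of $x$ is exactly the condition that promotes $C$ from positive semidefinite to positive definite, i.e., $C \succ 0$, which immediately yields the ``always-ascending'' conclusion without any further smallness assumption on $\|x_i\|$ or magnitude of $\|g\|$.
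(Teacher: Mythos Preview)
Your proposal is correct and follows essentially the same approach as the paper: compute $\nabla\sigma(p_c)^T L^1_\sigma(p_c,x) = \frac{1}{ND^2}\sum_{i=1}^N (\nabla\sigma(p_c)^T x_i)^2$, observe that this sum of squares is nonnegative, and use non-degeneracy together with $p_c \neq p_\sigma$ to rule out equality. Your additional rewriting via the matrix $C = \sum_i x_i x_i^T$ anticipates exactly the positive-definite matrix $P(x)$ that the paper introduces a few lines later in Lemma~\ref{lem: gradD}.
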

\begin{proof}
$L^1_\sigma$ is an always-ascending direction if and only if satisfies $\nabla\sigma(p_c)^TL^1_\sigma(p_c, x) > 0$, which is true since 
\begin{equation}
\nabla\sigma(p_c)^TL^1_\sigma(p_c, x) = \frac{1}{ND^2}\sum_{i=1}^N |\nabla\sigma(p_c)^Tx_i|^2 \nonumber
\end{equation}
is positive if the deployment $x$ spans $\mathbb{R}^m$ and $p_c \ne p_\sigma$.
\end{proof}

Lemma \ref{le: l1} motivates us to analyze how fast $L^1_\sigma$ (\ref{eq: L1.1}) diverges from $L_\sigma$ (\ref{eq: Ls}), the actual computed direction, concerning $\|x\|$. 
\begin{lemma}
\label{lem: ll1}
For a signal $\sigma$, a conservative divergence between $L^1_\sigma(p_c,x)$ and $L_\sigma(p_c,x)$ depends linearly on $D$, i.e.,
\begin{equation}
\|L_\sigma(p_c,x) - L^1_\sigma(p_c,x)\| \leq MD,\nonumber
\end{equation}
where $M$ is the upper bound in \eqref{eq_grad_hess_bound}.
\end{lemma}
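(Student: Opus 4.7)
The plan is to write $L_\sigma(p_c,x) - L^1_\sigma(p_c,x)$ as a single sum weighted by $x_i$ and then estimate the weight with the Taylor bound already derived in \eqref{eq: stay}. Concretely, because $\sum_{i=1}^N x_i = 0$ (so the zero-th order term $L^0_\sigma$ contributes nothing), I can combine \eqref{eq: Ls} and \eqref{eq: L1.1} into
\begin{equation*}
L_\sigma(p_c,x) - L^1_\sigma(p_c,x) = \frac{1}{ND^2}\sum_{i=1}^N\!\bigl[\sigma(p_c+x_i) - \sigma(p_c) - \nabla\sigma(p_c)^T x_i\bigr]\,x_i .
\end{equation*}

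Next I would apply inequality \eqref{eq: stay} with $a = p_c$ and $b = p_c + x_i$ (so $a-b = -x_i$), which gives the scalar bound $|\sigma(p_c+x_i) - \sigma(p_c) - \nabla\sigma(p_c)^T x_i| \leq M\|x_i\|^2$. Taking norms in the previous display and using the triangle inequality termwise then yields
\begin{equation*}
\|L_\sigma(p_c,x) - L^1_\sigma(p_c,x)\| \leq \frac{1}{ND^2}\sum_{i=1}^N M\|x_i\|^2\,\|x_i\| = \frac{M}{ND^2}\sum_{i=1}^N \|x_i\|^3 .
\end{equation*}

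To close the argument I would use the definition $D = \max_i \|x_i\|$, so $\|x_i\|^3 \leq D^3$ for every $i$, and hence $\sum_{i=1}^N \|x_i\|^3 \leq N D^3$. Substituting gives $\|L_\sigma - L^1_\sigma\| \leq MD$, exactly as claimed.

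There is no real obstacle here beyond being careful with the sign in \eqref{eq: stay} and with the cancellation of the constant term. The only mild loss of tightness is in the last bound $\|x_i\|^3 \leq D^3$, which is responsible for the word ``conservative'' in the statement; a sharper bound would use $\frac{1}{N}\sum_i \|x_i\|^3 / D^2$, but the stated linear-in-$D$ dependence is the clean form needed later to couple the error to the shape of the swarm.
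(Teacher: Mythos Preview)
Your proof is correct and follows essentially the same approach as the paper: write $L_\sigma - L^1_\sigma$ as the single sum with weights $\sigma(p_c+x_i)-\sigma(p_c)-\nabla\sigma(p_c)^Tx_i$, bound each weight by $M\|x_i\|^2$ via \eqref{eq: stay}, apply the triangle inequality to get $\frac{1}{ND^2}\sum_i M\|x_i\|^3$, and finish with $\|x_i\|\le D$. The paper's proof is just a terser version of exactly these steps.
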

\begin{proof}
From (\ref{eq: stay}), (\ref{eq: Ls}) and (\ref{eq: L1.1}), then it follows that
\begin{equation}
\scalemath{0.8}{\|L_\sigma - L^1_\sigma\| = \frac{1}{ND^2} \Bigg\|\sum_{i=1}^N \Big(\sigma(p_c + x_i) - \sigma(p_c) - \nabla\sigma(p_c)^Tx_i\Big)x_i\Bigg\|} \nonumber
\end{equation}
$\scalemath{0.8}{\leq \frac{1}{ND^2}\sum_{i=1}^N M\|x_i\|^3 \leq MD.}$
\end{proof}
\begin{remark}
We now have a clear strategy: how to design deployments $x$ such that $L_\sigma^1$ is (almost-)parallel to the gradient $\nabla\sigma$; hence, the (potential) deviation of $L_\sigma$ from $L_\sigma^1$ is still admissible to guarantee that $L_\sigma$ is an ascending direction.
\end{remark}
Indeed, if $D$ is small enough, then it is certain that $L_\sigma$ is an ascending direction like $L_\sigma^1$. However, what is a \emph{small} $D$ for generic signals and deployments? Let us define $E := L_\sigma - L_\sigma^1$. Then we have $\nabla \sigma(p_c)^T L_\sigma = \nabla \sigma(p_c)^T (L_\sigma^1 + E)$, and let us consider a compact set $\mathcal{S} \subset \mathbb{R}^m$ with $p_\sigma \notin \mathcal{S}$. By the definition of $\sigma$ and its bounded gradient, we know that $\nabla \sigma(p_c)^T L_\sigma^1$ has a minimum $F_\mathcal{S}(x)$ that depends on the chosen deployment $x$ and it is positive if $x$ is non-degenerate. Therefore, we have $\nabla \sigma(p_c)^T L_\sigma \geq F_\mathcal{S}(x) - K_\mathcal{S}^{\text{max}} M_\mathcal{S}D$, where $K_\mathcal{S}^{\text{max}}$ and $M_\mathcal{S}$ are the maximum norms of the signal's gradient and Hessian in $\mathcal{S}$, respectively; thus, if
\begin{equation}
\label{eq: xiD}
F_\mathcal{S}(x) - K_\mathcal{S}^{\text{max}} M_\mathcal{S} D > 0,
\end{equation}
then $L_\sigma$ is an ascending direction in $\mathcal{S}$. Finding the minimum $F_\mathcal{S}(x)$ numerically can be arduous, so we can conservatively bound it using the following result.

\begin{lemma}
\label{lem: gradD}
If the deployment $x$ is non-degenerate, then there exists $C(x) > 0$ such that
\begin{equation}
\scalemath{1}{\frac{1}{C(x)}\|\nabla\sigma(p_c)\|^2 \leq L^1_\sigma(p_c, x)^T\nabla\sigma(p_c) \leq C(x) \|\nabla\sigma(p_c)\|^2.} \nonumber
\end{equation}
\end{lemma}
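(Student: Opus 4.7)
The plan is to rewrite the inner product $L^1_\sigma(p_c, x)^T\nabla\sigma(p_c)$ as a quadratic form in $\nabla\sigma(p_c)$ and then invoke the Rayleigh--Ritz characterization of the extreme eigenvalues.

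First, substituting the definition \eqref{eq: L1.1} and using $(\nabla\sigma(p_c)^T x_i)x_i = x_i x_i^T \nabla\sigma(p_c)$, I would write
\begin{equation*}
L^1_\sigma(p_c, x)^T\nabla\sigma(p_c) = \nabla\sigma(p_c)^T P(x)\, \nabla\sigma(p_c),
\end{equation*}
where $P(x) := \tfrac{1}{ND^2}\sum_{i=1}^N x_i x_i^T \in \mathbb{R}^{m\times m}$ is symmetric and positive semidefinite.

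Second, I would argue that non-degeneracy of $x$ is equivalent to $P(x) \succ 0$. Indeed, for any $v\in\mathbb{R}^m$, $v^T P(x) v = \tfrac{1}{ND^2}\sum_{i=1}^N (v^T x_i)^2$, which vanishes iff $v \perp x_i$ for every $i$; since by Definition \ref{deployment} the vectors $\{x_i\}$ span $\mathbb{R}^m$, this forces $v = 0$, so all eigenvalues of $P(x)$ are strictly positive.

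Third, by the Rayleigh--Ritz inequalities,
\begin{equation*}
\lambda_{\min}(P(x)) \|\nabla\sigma(p_c)\|^2 \leq \nabla\sigma(p_c)^T P(x) \nabla\sigma(p_c) \leq \lambda_{\max}(P(x)) \|\nabla\sigma(p_c)\|^2,
\end{equation*}
and setting $C(x) := \max\{\lambda_{\max}(P(x)),\, 1/\lambda_{\min}(P(x))\}$ yields the claimed two-sided bound with $C(x) > 0$.

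I expect no major obstacle here: the proof is essentially a one-line reshuffle into a quadratic form followed by a textbook eigenvalue bound. The only subtle point worth emphasizing is that the non-degeneracy hypothesis is used exactly to guarantee $\lambda_{\min}(P(x)) > 0$, so that $1/\lambda_{\min}(P(x))$ is finite and the lower bound is non-trivial; otherwise $C(x)$ would blow up as $x$ approaches a degenerate configuration.
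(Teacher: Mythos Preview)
Your proof is correct and follows essentially the same approach as the paper: rewrite the inner product as a quadratic form $\nabla\sigma(p_c)^T P(x)\,\nabla\sigma(p_c)$, use non-degeneracy to get $P(x)\succ 0$, and apply the Rayleigh--Ritz eigenvalue bounds to choose $C(x)$. The only cosmetic difference is that the paper defines $P(x)=\sum_i x_i x_i^T$ and keeps the $\tfrac{1}{ND^2}$ factor outside, so its $C(x)=\max\{\lambda_{\max}(P)/(ND^2),\,ND^2/\lambda_{\min}(P)\}$, which is equivalent to yours after absorbing the normalization.
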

\begin{proof}
First, the trivial case $\nabla\sigma(p_c) = 0$ satisfies the claim. In any other case, we know from Lemma \ref{le: l1} that $\nabla\sigma(p_c)^TL^1_\sigma(p_c, x) = \frac{1}{ND^2}\sum_{i=1}^N |\nabla\sigma(p_c)^Tx_i|^2 = \frac{1}{ND^2} \nabla\sigma(p_c)^T P(x) \nabla\sigma(p_c) > 0$ for the positive definite matrix $P(x)= \sum_{i=1}^N x_ix_i^T$ since $x$ is non-degenerate. Therefore,
\begin{equation}
\frac{\lambda_{\text{min}}\{P(x)\}}{ND^2}  \| \nabla \sigma \|^2 \le \nabla\sigma^T L^1_\sigma  \le \frac{\lambda_{\text{max}}\{P(x)\}}{ND^2}  \| \nabla \sigma \|^2, \nonumber
\end{equation}
where $\lambda_{\text{\{min,max\}}}\{P(x)\}$ are the minimum and maximum eigenvalues of $P(x)$, respectively. We choose $C(x) = \max\{ \frac{\lambda_{\text{max}}\{P(x)\}}{ND^2}, \frac{ND^2}{\lambda_{\text{min}}\{P(x)\}} \}$.
\end{proof}
Later, we will see that $\lambda_{\text{min}}\{P(x)\} = \lambda_{\text{max}}\{P(x)\}$, where $P(x)= \sum_{i=1}^N x_ix_i^T$, for regular polygons/polyhedron deployments, similarly as the equal eigenvalues of a covariance matrix for a uniform deployment in a circle. Now we are ready to give an easier-to-check condition than (\ref{eq: xiD}).
\begin{prop}
\label{pro: S}
Let $\mathcal{S}$ be a compact set with $p_\sigma \notin \mathcal{S}$. Then if 
\begin{equation}
\frac{\lambda_{\text{min}}\{P(x)\}}{ND^2}
 K^{\text{min}}_\mathcal{S} - M_\mathcal{S}D > 0, \nonumber
\end{equation}
for $\forall p_i \in \mathcal{S}$, where $K^{\text{min}}_\mathcal{S}$ is the minimum norm of the gradient in the compact set $\mathcal{S}$, $P(x) = \sum_{i=1}^{N} x_i x_i^T$, then (\ref{eq: Ls}) is an ascending direction at $p_c \in \mathcal{S}$.
\end{prop}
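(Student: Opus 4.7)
The plan is to combine the lower bound on $\nabla\sigma(p_c)^T L_\sigma^1$ from the proof of Lemma \ref{lem: gradD} with the error bound between $L_\sigma$ and $L_\sigma^1$ from Lemma \ref{lem: ll1}, so as to replicate the reasoning leading to inequality \eqref{eq: xiD} but with an explicitly computable lower bound on $F_\mathcal{S}(x)$.

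First, I would decompose $L_\sigma = L_\sigma^1 + E$ with $E := L_\sigma - L_\sigma^1$, and write
\begin{equation}
\nabla\sigma(p_c)^T L_\sigma(p_c,x) = \nabla\sigma(p_c)^T L_\sigma^1(p_c,x) + \nabla\sigma(p_c)^T E. \nonumber
\end{equation}
For the first term, I would invoke the intermediate estimate established inside the proof of Lemma \ref{lem: gradD}, namely
\begin{equation}
\nabla\sigma(p_c)^T L_\sigma^1(p_c,x) \geq \frac{\lambda_{\text{min}}\{P(x)\}}{ND^2}\,\|\nabla\sigma(p_c)\|^2. \nonumber
\end{equation}
For the second term, Lemma \ref{lem: ll1} (applied with the Hessian bound $M_\mathcal{S}$ valid on $\mathcal{S}$, rather than the global $M$, since all $p_i \in \mathcal{S}$) gives $\|E\| \leq M_\mathcal{S} D$, and Cauchy--Schwarz yields $\nabla\sigma(p_c)^T E \geq -\|\nabla\sigma(p_c)\|\,M_\mathcal{S} D$.

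Next, I would factor $\|\nabla\sigma(p_c)\|$ from the combined inequality to obtain
\begin{equation}
\nabla\sigma(p_c)^T L_\sigma(p_c,x) \geq \|\nabla\sigma(p_c)\| \left( \frac{\lambda_{\text{min}}\{P(x)\}}{ND^2}\,\|\nabla\sigma(p_c)\| - M_\mathcal{S} D \right). \nonumber
\end{equation}
Since $p_c \in \mathcal{S}$ and $p_\sigma \notin \mathcal{S}$, Definition \ref{signal} ensures $\|\nabla\sigma(p_c)\| \geq K^{\text{min}}_\mathcal{S} > 0$. Substituting this lower bound in the parenthesis and using the hypothesis shows that the right-hand side is strictly positive, which is exactly the defining property of an ascending direction.

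The only delicate point I expect is the localization of the Hessian constant: Lemma \ref{lem: ll1} is stated with the global bound $M$, but its proof only uses the Taylor remainder along segments between $p_c$ and $p_c + x_i$; as long as those segments stay in $\mathcal{S}$ (which is implied by the assumption $p_i \in \mathcal{S}$ together with convexity if $\mathcal{S}$ is taken convex, or otherwise by enlarging $\mathcal{S}$ slightly to a convex hull), the same inequality holds with $M_\mathcal{S}$ in place of $M$. Everything else is a routine combination of the two earlier lemmas with the triangle/Cauchy--Schwarz inequality.
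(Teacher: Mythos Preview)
Your proposal is correct and follows essentially the same route as the paper: decompose $L_\sigma = L_\sigma^1 + E$, lower-bound $\nabla\sigma(p_c)^T L_\sigma^1$ via the Rayleigh-quotient estimate from Lemma~\ref{lem: gradD}, bound $\nabla\sigma(p_c)^T E$ by $-\|\nabla\sigma(p_c)\|M_\mathcal{S}D$ using Lemma~\ref{lem: ll1} and Cauchy--Schwarz, then factor out $\|\nabla\sigma(p_c)\|$ and replace it by $K^{\text{min}}_\mathcal{S}$. Your remark about needing convexity (or a suitable enlargement) of $\mathcal{S}$ to legitimately replace the global $M$ by $M_\mathcal{S}$ in the Taylor remainder is a valid caveat that the paper's proof leaves implicit.
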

\begin{proof}
Since Lemma \ref{lem: gradD} lower bounds $F_\mathcal{S}(x)$ in (\ref{eq: xiD}) we have 
\begin{align}
\nabla\sigma(p_c)^TL_\sigma &= \nabla\sigma(p_c)^TL^1_\sigma + \nabla\sigma(p_c)^TE \nonumber \\
&= \frac{1}{ND^2}\nabla\sigma(p_c)^TP(x)\nabla\sigma(p_c) + \nabla\sigma(p_c)^TE \nonumber \\
&\geq \frac{\lambda_{\text{min}}\{P(x)\}}{ND^2} \|\nabla\sigma(p_c)\|^2 + \nabla\sigma(p_c)^TE,
\nonumber
\end{align}
and to asses that $L_\sigma(p_c)$ with $p_c\in\mathcal{S}$ is an ascending direction is enough to check $\frac{\lambda_{\text{min}}\{P(x)\}}{ND^2} \|\nabla\sigma(p_c)\|^2 - \|\nabla\sigma(p_c)\|M_\mathcal{S}D > 0$ or $\frac{\lambda_{\text{min}}\{P(x)\}}{ND^2}
 K^{\text{min}}_\mathcal{S} - M_\mathcal{S}D > 0$.
\end{proof}

We note that the covariance matrix $\frac{1}{N}\sum_{i=1}^Nx_ix_i^T$ is \emph{normalized} by $D^2$ for checking the condition in Proposition \ref{pro: S}, making the effective dependency on $D$ linear and matching the result of Lemma \ref{lem: ll1}.

While the signal $\sigma$ is unknown, we can always engineer $D$ depending on the expected scenarios. For example, in the scenario of contaminant leakage, scientists can provide the expected values of $M_\mathcal{S}, K_\mathcal{S}^{\text{max}}$, and $K_\mathcal{S}^{\text{min}}$ in the \emph{patrolling area} $\mathcal{S}$ for the minimum/maximum contamination thresholds where the robot team needs to react reliably. We also note that designing $L^1_\sigma$ to be parallel to the gradient $\nabla\sigma$ makes the scalar product $\nabla\sigma^TL_\sigma > 0$ more robust concerning $D$ since it makes admissible a larger deviation of $L_\sigma$ from $L^1_\sigma$, e.g., misplaced robots from a reference deployment $x$. 


The analysis is organized as follows: first, we examine the four-robot rectangular formation. Second, we explore regular polygons and affine transformations, i.e., the morphing of $x$, to assess the sensitivity of the ascending direction. Finally, we investigate deploying an infinite number of robots with a density distribution within a given shape.

\begin{figure}
\centering
\includegraphics[trim={1.5cm 4.4cm 0 5.657cm}, clip, width=0.85\columnwidth]{./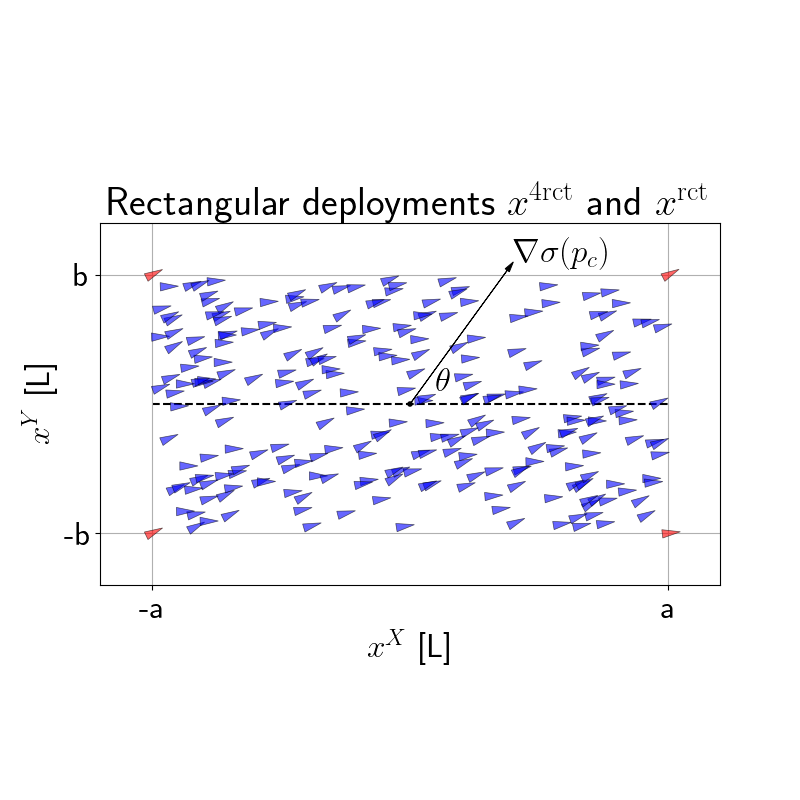}
\caption{The 4-robot rectangular deployment $x^\text{4rct}$ are at the corners of the rectangle in red color. The $250$ robots in blue color belong to the deployment $x^\text{rct}$ and they are spread uniformly within the rectangle. The gradient $\nabla\sigma(p_c)$ is arbitrary and forms an angle $\theta$ with the horizontal axis of the deployment.} 
\label{fig: 4rect}
\end{figure}

Consider four robots at the corners of a rectangle as deployment $x^{4\text{rct}}$, and assume that the long side of the rectangle is parallel to the horizontal axis so that the gradient $\nabla\sigma(p_c) = \|\nabla\sigma(p_c)\| \left[\begin{smallmatrix}\cos(\theta) & \sin(\theta) \end{smallmatrix}\right]^T$ has an arbitrary norm and angle as depicted in Figure \ref{fig: 4rect}. Then, the always-ascending direction $L^1_{\sigma}$ can be written as
\begin{equation}
\scalemath{0.8}{
L^1_\sigma(p_c, x^{4\text{rct}}) = \frac{\|\nabla\sigma(p_c)\|}{4D^2} \sum_{i=1}^4 \left(\begin{bmatrix}\cos(\theta) & \sin(\theta) \end{bmatrix} \begin{bmatrix}x_i^X \\ x_i^Y \end{bmatrix}\right)\begin{bmatrix}x_i^X \\ x_i^Y \end{bmatrix},}
\label{eq: L1theta}
\end{equation}
where the superscripts $X$ and $Y$ denote the horizontal and vertical coordinates of $x_i$ respectively. Therefore, for the deployment $x^{4\text{rct}}$ we have that
\begin{align}
&\scalemath{0.8}{L^1_\sigma(p_c, x^{4\text{rct}}) = \frac{\|\nabla\sigma(p_c)\|}{4(a^2+b^2)}\bigg(\big(a\cos(\theta)+b\sin(\theta)\big)\begin{bmatrix}a\\b \end{bmatrix} +} \nonumber \\ 
&\scalemath{0.8}{+\big(-a\cos(\theta)+b\sin(\theta)\big)\begin{bmatrix}-a\\b \end{bmatrix}
+ \big(-a\cos(\theta)-b\sin(\theta)\big)\begin{bmatrix}-a\\-b \end{bmatrix} + \nonumber} \\
& \scalemath{0.8}{+\big(a\cos(\theta)-b\sin(\theta)\big)\begin{bmatrix}a\\-b \end{bmatrix} \bigg) = \frac{\|\nabla\sigma(p_c)\|}{(a^2 + b^2)}\begin{bmatrix}a^2\cos(\theta)\\b^2 \sin(\theta)\end{bmatrix},} \label{eq: l1sq4}
\end{align}
	where we can observe that if $a$ or $b$ equals zero, then only the projection of $\nabla\sigma(p_c)$ onto the line described by the \emph{degenerate} $x$ in $L^1_\sigma(p_c, x^{4\text{rct}})$ is observed. We can also see that for the square $a=b$, we have that $L^1_\sigma(p_c, x^{4\text{rct}}) = \frac{1}{2}\nabla\sigma(p_c)$. In general, $L^1_\sigma(p_c, x^{N\text{poly}}) \propto \left[\begin{smallmatrix}\cos\theta & \sin\theta \end{smallmatrix}\right]^T$, i.e., it is parallel to $\nabla\sigma(p_c)$, for any deployment $x^{N\text{poly}}$ forming a regular polygon or polyhedron as it is shown in the following technical result.
	\begin{lemma}
	\label{le: poly}
	Consider the deployment $x^{N\text{poly}}$ forming a regular polygon or polyhedron, and $b\in\mathbb{R}^m$, then
	\begin{equation}
	\sum_{i=1}^N \left(b^Tx_i\right)x_i \propto b. \nonumber 
	\end{equation}
	\end{lemma}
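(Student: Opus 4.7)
The plan is to reduce the claim to a single statement about the matrix
\begin{equation*}
P(x^{N\text{poly}}) := \sum_{i=1}^N x_i x_i^T \in \mathbb{R}^{m\times m}.
\end{equation*}
Indeed, $\sum_{i=1}^N (b^T x_i)x_i = P(x^{N\text{poly}})\,b$, so the lemma is equivalent to showing that $P(x^{N\text{poly}}) = \alpha I_m$ for some scalar $\alpha$, in which case $Pb = \alpha b \propto b$ for every $b\in\mathbb{R}^m$. Conveniently, this is also the setting already invoked after Lemma~\ref{lem: gradD}, where the author claims that $\lambda_{\text{min}}\{P(x)\} = \lambda_{\text{max}}\{P(x)\}$ for regular deployments.

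The core of the argument is the rotational symmetry built into a regular polygon ($m=2$) or regular polyhedron ($m=3$). Let $G \subset SO(m)$ denote the group of rotations that map the vertex set $\{x_i\}_{i=1}^N$ to itself. For every $R \in G$, the action $x_i \mapsto Rx_i$ is just a permutation of the vertices, so
\begin{equation*}
R\,P(x^{N\text{poly}})\,R^T = \sum_{i=1}^N (Rx_i)(Rx_i)^T = \sum_{i=1}^N x_i x_i^T = P(x^{N\text{poly}}).
\end{equation*}
Thus $P(x^{N\text{poly}})$ commutes with every element of $G$, and the only task left is to argue that this forces it to be a scalar multiple of identity.

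For $m=2$, the group $G$ of a regular $N$-gon with $N\ge 3$ contains the rotation by $2\pi/N$, whose angle lies strictly between $0$ and $\pi$; the only real $2\times 2$ matrices commuting with such a rotation are scalar multiples of $I_2$, so $P = \alpha I_2$. For $m=3$, the rotation groups of the Platonic solids act irreducibly on $\mathbb{R}^3$, so by (the real form of) Schur's lemma the only real symmetric matrices commuting with all of $G$ are scalar multiples of $I_3$. Since $P$ is a nonzero sum of rank-one positive semidefinite matrices (using that $x$ is non-degenerate), the scalar $\alpha$ is positive, which finishes the proof.

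The main obstacle is the 3D case: the symmetry argument is clean only once one invokes irreducibility of the Platonic rotation groups on $\mathbb{R}^3$, which is a statement external to the paper. As a fallback, for $m=2$ one can avoid group theory entirely by parameterising $x_k = D(\cos\theta_k,\sin\theta_k)^T$ with $\theta_k = \theta_0 + 2\pi k/N$ and using $\sum_k\cos^2\theta_k = \sum_k\sin^2\theta_k = N/2$ and $\sum_k \cos\theta_k\sin\theta_k = 0$ (valid for $N\ge 3$) to compute $P = (ND^2/2)I_2$ explicitly; an analogous, case-by-case verification is possible but tedious for the five Platonic solids in 3D.
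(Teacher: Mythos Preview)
Your argument is correct and reaches the same reduction as the paper---both rewrite $\sum_i (b^Tx_i)x_i = P(x)b$ and then argue $P=\alpha I_m$---but the way you force $P$ to be scalar is genuinely different. The paper works coordinate-wise: it notes that all $\|x_i\|$ are equal (vertices lie on a sphere), then uses reflection symmetries of the polygon/polyhedron to find axes in which the vertex projections onto the coordinate planes are even, so that the diagonal entries $\sum_i (x_i^X)^2,\sum_i (x_i^Y)^2,\sum_i (x_i^Z)^2$ coincide and the cross terms $\sum_i x_i^Xx_i^Y$, etc., vanish. Your route instead exploits the full rotational symmetry group $G$ to get $RPR^T=P$ for all $R\in G$, and then invokes irreducibility/Schur in $m=3$. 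Your argument is cleaner and more conceptual, and explains in one stroke why the eigenvalues of $P$ coincide; the paper's argument is more elementary and self-contained but correspondingly sketchier in $3$D (the ``even symmetry of projections'' step is asserted rather than verified). One small correction: in your $m=2$ step, the centraliser of a rotation by $2\pi/N$ in $\mathbb{R}^{2\times 2}$ is two-dimensional (it contains $aI_2+bJ$ with $J=\left[\begin{smallmatrix}0&-1\\1&0\end{smallmatrix}\right]$), not just scalars; the conclusion still holds because $P$ is symmetric, which kills the $J$ component---you use exactly this symmetry in the $m=3$ case, so just add it here too.
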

	\begin{proof}
	First, we have that $\sum_{i=1}^N \left(b^T x_i\right)x_i = \sum_{i=1}^N(x_ix_i^T)b$; then, in order to be proportional to $b$, we need all the diagonal elements of the positive semi-definite matrix $P = \sum_{i=1}^N(x_ix_i^T)$ be equal, and the non-diagonal to be zero. Since all the considered deployments lie on the $(m-1)$-sphere, all the $\|x_i\|$ are equal. In addition, all the dihedral angles of the considered deployments are equal; then we can find XYZ axes where the set of all projections of the vertices $x_i$ on the planes XY, XZ, and YZ, exhibit an even symmetry; hence, $\sum_i^N (x_i^X)^2 = \sum_i^N (x_i^Y)^2 = \sum_i^N (x_i^Z)^2$. Finally, a regular polygon has reflection symmetry for one axis of the same XY; hence, $\sum_i^N (x_i^X)(x_i^Y) = 0$. This is also true for a regular polyhedron for the same planes XY, XZ, and YZ.
	\end{proof}

	A similar result has been shown in \cite{brinon2015distributed, brinon2019multirobot} with their circular and spherical formations since regular polygons and polyhedra are inscribed in the circle and sphere. We now analyze the sensitivity of $L^1_\sigma(p_c, x^{N\text{poly}})$ when the deployment is under an affine transformation, e.g., scaling, rotation, and shearing. The formal affine transformation is given by $(I_N \otimes A)x^{N\text{poly}}$ with $A\in\mathbb{R}^{m\times m}$ where $A$ admits the Singular Value Decomposition (SVD) $A = U\Sigma V^T$ describing a rotation, scaling, and another rotation. 

	\begin{prop}
	\label{pro: usu}
	Consider the deployment $x^{N\text{poly}}$ forming a regular polygon or polyhedron, and the SVD $A = U\Sigma V^T$, then
	\begin{equation}
	L^1_\sigma(p_c,(I_N \otimes A) x^{N\text{poly}}) \propto U\Sigma^2U^Tr, \nonumber
	\end{equation}
	where $r\in\mathbb{R}^m$ is the unitary vector marking the direction of the gradient $\nabla\sigma(p_c)$.
	\end{prop}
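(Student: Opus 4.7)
The plan is to reduce the claim to the structure of the covariance-like matrix $P(y) := \sum_{i=1}^N y_i y_i^T$ for the transformed deployment $y_i = A x_i$, and then exploit the SVD of $A$ together with the isotropy of $P$ for the regular polygon/polyhedron.

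First I would rewrite the always-ascending direction in matrix form. Starting from (\ref{eq: L1.1}) applied to the transformed deployment $y := (I_N \otimes A) x^{N\text{poly}}$, I get
\begin{equation}
L^1_\sigma(p_c, y) = \frac{1}{N D_y^2} \sum_{i=1}^N \bigl(\nabla\sigma(p_c)^T y_i\bigr) y_i = \frac{1}{N D_y^2} P(y)\,\nabla\sigma(p_c), \nonumber
\end{equation}
where $D_y = \max_i \|y_i\|$. Since $y_i = A x_i$, we have $P(y) = A\bigl(\sum_i x_i x_i^T\bigr) A^T = A\,P(x^{N\text{poly}})\,A^T$, so the proportionality constants in front (the positive scalar $\frac{1}{N D_y^2}$ and $\|\nabla\sigma(p_c)\|$) can be absorbed into the $\propto$ symbol.

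Next I would invoke the isotropy established (implicitly) inside the proof of Lemma \ref{le: poly}: for a regular polygon or polyhedron, the matrix $P(x^{N\text{poly}})$ has equal diagonal entries and vanishing off-diagonal entries, hence $P(x^{N\text{poly}}) = \alpha\, I_m$ for some $\alpha > 0$ (this is precisely what makes $\sum_i(b^T x_i)x_i$ proportional to $b$ for every $b$). Substituting the SVD $A = U \Sigma V^T$ then gives
\begin{equation}
P(y) = \alpha\, A A^T = \alpha\, U \Sigma V^T V \Sigma U^T = \alpha\, U \Sigma^2 U^T, \nonumber
\end{equation}
since $V$ is orthogonal.

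Finally, putting the pieces together and writing $\nabla\sigma(p_c) = \|\nabla\sigma(p_c)\|\,r$, I conclude
\begin{equation}
L^1_\sigma(p_c, y) = \frac{\alpha\,\|\nabla\sigma(p_c)\|}{N D_y^2}\, U \Sigma^2 U^T r \;\propto\; U \Sigma^2 U^T r, \nonumber
\end{equation}
which is the stated result. I do not expect any serious obstacle here; the only subtle point is making sure that the isotropy $P(x^{N\text{poly}}) \propto I_m$ is correctly read off from Lemma \ref{le: poly}, and that the orthogonal factor $V$ in the SVD drops out cleanly via $V^T V = I_m$ — after that the identity $A A^T = U \Sigma^2 U^T$ finishes the argument.
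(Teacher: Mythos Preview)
Your proof is correct and follows essentially the same approach as the paper: both rely on the isotropy $P(x^{N\text{poly}}) \propto I_m$ encoded in Lemma~\ref{le: poly} and on $V^T V = I_m$ so that the orthogonal factor $V$ drops out. The only cosmetic difference is that the paper performs the computation via the substitution $\tilde r = V\Sigma U^T r$ and invokes Lemma~\ref{le: poly} in its stated sum form, whereas you package the same step at the matrix level as $A\,P(x^{N\text{poly}})\,A^T = \alpha\,A A^T = \alpha\,U\Sigma^2 U^T$.
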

	\begin{proof}
	Considering the change of coordinates $\tilde r = V\Sigma U^Tr$
	\begin{align}
	&\scalemath{0.8}{L^1_\sigma(p_c,(I_N \otimes (U\Sigma V^T)) x^{N\text{poly}}) \propto \sum_{i=1}^N \Big(r^T U\Sigma V^T x_i\Big)U\Sigma V^T x_i} \nonumber \\
	&\scalemath{0.8}{\propto U\Sigma V^T\sum_{i=1}^N \Big(\tilde r^T x_i\Big) x_i \propto U\Sigma V^T \tilde r \nonumber = U\Sigma V^T V \Sigma U^Tr=  U\Sigma^2 U^Tr,}
	\end{align}
where we have applied Lemma \ref{le: poly}.
\end{proof}

Note that $W = U\Sigma^2U^T$ is the unitary decomposition of a positive definite matrix, and $V$ is irrelevant (for example, $\theta$ is arbitrary in 2D in Figure \ref{fig: 4rect}), as shown in Lemma \ref{le: poly}. Stretching the deployment $x^{N\text{poly}}$ results in $L_\sigma^1$ following the direction ${^\xi\Sigma^2}\,\,{^\xi\nabla\sigma(p_c)}$, where $\xi$ denotes the \emph{stretching axes}. Such a morphing assists with maneuvering the swarm while it gets closer to the source.

\subsection{Continuous distributions of robots}
Looking at the definition of $L_\sigma$ in (\ref{eq: Ls}), it is clear that we can apply the superposition of many, potentially infinite, deployments. Indeed, we can step to \emph{continuous} deployments considering $N\to\infty$ within a specific area or volume. In such a case, the finite sum in (\ref{eq: Ls}) becomes an integral with a robot density function following the approximation of a definite integral with Riemann sums, and $ND^2$ in (\ref{eq: Ls}) becomes $A = \iiint_{\mathcal{A}} \rho(X,Y,Z) \mathrm{d}X\mathrm{d}Y\mathrm{d}Z$, where $\mathcal{A}$ is the corresponding surface/volume or \emph{perimeter} of $x$, and $\rho: \mathcal{A} \to \mathbb{R}^+$ is a normalized (robot) density function, e.g., equal everywhere for uniform distributions as it is shown in Figure \ref{fig: 4rect} with $x^{rct}$. 

For the sake of conciseness, let us focus on 2D for the following results, and for the sake of clarity with the notation, we will denote $x_i^X$ and $x_i^Y$ as simply $X$ and $Y$. Accordingly, for the case of a robot swarm described by the density function $\rho(X,Y)$ of robots within a generic shape/surface $\mathcal{A}$, the always-ascending direction can be calculated as
\begin{align} 
    \scalemath{0.8}{L_\sigma^1(p_c, x)}  & \scalemath{0.8}{= 
    \frac{\|\nabla\sigma(p_c)\|}{A}  \iint_\mathcal{A} \rho(X,Y) \left(
    \begin{bmatrix}\cos(\theta) & \sin(\theta) \end{bmatrix} 
    \begin{bmatrix}X \\ Y \end{bmatrix}
    \right)
    \begin{bmatrix}X \\ Y \end{bmatrix} 
    \mathrm{d}X \mathrm{d}Y}
    \nonumber \\
    & \scalemath{0.8}{= \frac{\|\nabla\sigma(p_c)\|}{A} \iint_\mathcal{A}  \rho(X,Y)  \begin{bmatrix}
        X^2 \cos(\theta) + XY\, \sin(\theta) \\
        Y^2 \sin(\theta) + XY\, \cos(\theta) \\
    \end{bmatrix}
    \mathrm{d}X \mathrm{d}Y.} \nonumber
\end{align}
Similar to the discrete case, e.g., see (\ref{eq: l1sq4}), it is enough to have $\iint_\mathcal{A} \rho(X,Y)XY  \mathrm{d}X\mathrm{d}Y$ and $\iint_\mathcal{A} \rho(X,Y)(X^2 - Y^2) \mathrm{d}X\mathrm{d}Y$ equal to zero to have $L^1_\sigma$ parallel to the gradient.

\begin{prop}
\label{pro: U}
Consider a signal $\sigma$, a continuous deployment $x$ with robot density function $\rho(X,Y)$ within a surface $\mathcal{A}$, and a Cartesian coordinate system $(X-Y)$ with origin at the centroid of the deployment $p_c$. The direction of $L_\sigma^1(p_c, x)$ is parallel to the gradient $\nabla\sigma(p_c)$ if $\rho(X,Y)$ and $\mathcal{A}$ hold the following symmetries:
\begin{enumerate}
\item[S0)] The robot density function $\rho(X,Y)$ has reflection symmetry (even function) concerning at least one of the axes $(X-Y)$, e.g., $\rho(X,Y) = \rho(-X,Y)$.
\item[S1)] The surface $\mathcal{A}$ has reflection symmetry concerning the same axes as in S0.
\item[S2)] For each quadrant of $(X-Y)$, the robot density function $\rho(X,Y)$ has reflection symmetry concerning the bisector of the quadrant.
\item[S3)] The surface $\mathcal{A}$ has reflection symmetry concerning the same axes as in S2.
\end{enumerate}
\end{prop}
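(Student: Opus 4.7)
The strategy is to show that under S0--S3, the two ``obstruction'' integrals that prevent $L^1_\sigma(p_c,x)$ from being parallel to $\nabla\sigma(p_c)$ vanish. Looking at the two components displayed just before the statement, the vector $L^1_\sigma(p_c,x)$ is parallel to $\nabla\sigma(p_c) = \|\nabla\sigma(p_c)\|[\cos\theta,\sin\theta]^T$ if and only if
\begin{equation}
I_{XY} := \iint_\mathcal{A} \rho(X,Y)\,XY\,\mathrm{d}X\mathrm{d}Y = 0 \nonumber
\end{equation}
and
\begin{equation}
I_{XX} := \iint_\mathcal{A} \rho(X,Y)\,X^2\,\mathrm{d}X\mathrm{d}Y = \iint_\mathcal{A} \rho(X,Y)\,Y^2\,\mathrm{d}X\mathrm{d}Y =: I_{YY}. \nonumber
\end{equation}
Indeed, with these two identities the second component of $L^1_\sigma$ reduces to $(I_{YY}/A)\|\nabla\sigma\|\sin\theta$ and the first to $(I_{XX}/A)\|\nabla\sigma\|\cos\theta$, which gives parallelism with the gradient.

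First, I would dispatch the off-diagonal integral $I_{XY}$ using S0 and S1. Without loss of generality assume the axis of reflection referenced in S0 is the $Y$-axis. Apply the change of variable $X\mapsto -X$: by S1 the region $\mathcal{A}$ is mapped onto itself with Jacobian $1$, by S0 the density $\rho(X,Y)$ is unchanged, while the integrand $XY$ flips sign. Hence $I_{XY} = -I_{XY}$, so $I_{XY} = 0$.

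Next, I would handle the equality $I_{XX}=I_{YY}$ using S2 and S3. Split $\mathcal{A}$ into the four pieces $\mathcal{A}_q := \mathcal{A}\cap Q_q$, $q=1,\dots,4$, where $Q_q$ is the $q$-th open quadrant (boundary contributions are measure zero). In each quadrant perform the change of variable $(X,Y)\mapsto (Y,X)$, i.e.\ reflection across the quadrant bisector. By S3 this map sends $\mathcal{A}_q$ onto itself, by S2 it leaves $\rho$ invariant, and it exchanges $X^2$ with $Y^2$. Therefore $\iint_{\mathcal{A}_q}\rho X^2 = \iint_{\mathcal{A}_q}\rho Y^2$ for each $q$; summing over the four quadrants gives $I_{XX}=I_{YY}$, concluding the proof.

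The content is almost entirely bookkeeping; the main subtlety I anticipate is being careful that the symmetry in S2 is \emph{per-quadrant} rather than global (a function may be symmetric about each quadrant's bisector without being symmetric about the single line $Y=X$), which is why the argument must be applied quadrant by quadrant using S3 to keep each $\mathcal{A}_q$ invariant under the swap $(X,Y)\mapsto(Y,X)$.
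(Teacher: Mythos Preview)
Your plan is correct and follows the same skeleton as the paper: reduce parallelism to the vanishing of $I_{XY}$ and of $I_{XX}-I_{YY}$, dispatch the first with S0--S1 via $X\mapsto -X$, and dispatch the second with S2--S3 via a quadrant decomposition.

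One slip to repair in the second step: the map $(X,Y)\mapsto(Y,X)$ is reflection across the line $Y=X$, which is the bisector only of quadrants $1$ and $3$; applied to $Q_2$ or $Q_4$ it sends the quadrant to the opposite one, so S3 does not guarantee $\mathcal{A}_q$ is preserved there. In quadrants $2$ and $4$ the bisector is $Y=-X$ and the correct reflection is $(X,Y)\mapsto(-Y,-X)$. This costs you nothing---that map also has unit Jacobian and also exchanges $X^2$ with $Y^2$---so your argument goes through verbatim once you use the appropriate reflection in each quadrant. You already flagged exactly this subtlety in your last paragraph; the formula just needs to match it.

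For comparison, the paper handles $I_{XX}-I_{YY}$ by rotating each quadrant through $-\pi/4$ so that $X^2-Y^2$ becomes proportional to $\epsilon\psi$, and then argues that the resulting integrand is odd in $\epsilon$ when the boundary curves are even. Your direct reflection across the quadrant bisector is a bit more transparent and avoids the explicit change of variables.
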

\begin{proof}

Without loss of generality, we assume that the reflection symmetry is on the $Y$ axis, and we assume symmetric integration limits for the horizontal axis as in Figure \ref{fig: obs_xxyy}(a). First, if $\alpha(X)$ and $\beta(X)$ are both even functions as illustrated in Figure \ref{fig: obs_xxyy}(a), we are going to show that
\begin{align}
&\scalemath{0.8}{\iint_{\mathcal{A}} \rho(X,Y) XY \; \mathrm{d}X \mathrm{d}Y =} \nonumber 
\\&\scalemath{0.8}{\quad
\int_{-t_\beta}^{t_\beta}\int_{0}^{\beta(X)} \rho(X,Y) XY \, \mathrm{d}X  \mathrm{d}Y
+
\int_{-t_\alpha}^{t_\alpha}\int_{\alpha(X)}^{0} \rho(X,Y) XY \, \mathrm{d}X \mathrm{d}Y} \nonumber
\\&\scalemath{0.8}{\quad= 
\int_{-t_\beta}^{t_\beta} X F(X,\beta(X)) \, \mathrm{d}X  
-
\int_{-t_\alpha}^{t_\alpha} X F(X,\alpha(X)) \, \mathrm{d}X  = 0},
\label{eq: int0}
\end{align}
 where $F(X,Y) = \int \rho(X,Y) Y \; \mathrm{d}Y$. Given an arbitrary even function $f(X)$ and knowing that $\rho(X,Y) = \rho(-X,Y)$, we have that $F(X,Y) = F(-X,Y)$ and $F(X,Y) = F(X,f(X)) = F(-X,f(-X)) = F(X,-Y)$; therefore, both integrals in (\ref{eq: int0}) are zero considering symmetries S\{0,1\}. 

Next, we will divide $\mathcal{A}$ into four parts for each quadrant, as it is shown in Figure \ref{fig: obs_xxyy}(b), so that

\begin{equation}
\scalemath{0.8}{
\iint_{\mathcal{A}} \rho(X,Y) (X^2-Y^2) \; \mathrm{d}X \mathrm{d}Y 
=
\sum_{i=1}^4
\iint_{\mathcal{A}_i} \rho(X,Y) (X^2-Y^2) \; \mathrm{d}X \mathrm{d}Y
}, \nonumber
\end{equation}
where $\mathcal{A}_i$ is the area of the quadrant $i \in \{1,2,3,4\}$. From the symmetry exhibited by $X^2-Y^2$ we propose the following change of variables $g(\epsilon,\psi) = (\psi + \epsilon, \psi - \epsilon)/\sqrt{2}$, which is equivalent to a rotation of $-\frac{\pi}{4}$ radians for the $(X-Y)$ axes. Since $\int_{A} f(x,y) = \int_{B} (f \circ g) |J_g|$, where $|J_g| = |\begin{bmatrix}\nabla g_1 & \nabla g_2 \end{bmatrix}^T| = \sqrt{2}$, we have

\begin{align*}
&\scalemath{0.8}{\iint_{\mathcal{A}_i} \rho(X,Y) (X^2-Y^2) \; \mathrm{d}X \mathrm{d}Y
= 2 \sqrt{2} \iint_{\mathcal{B}_i} \rho(\epsilon,\psi) \epsilon \psi \; d\epsilon d\psi} \\ 
&\scalemath{0.8}{= 2 \sqrt{2} \int_{-t_i}^{t_i}\int_{|\epsilon|}^{R_i(\epsilon)} \rho(\epsilon,\psi) \epsilon \psi \; d\epsilon d\psi
= 
2 \sqrt{2} \int_{-t_i}^{t_i} \epsilon \left [F(\epsilon,R_i(\epsilon)) - F(\epsilon,|\epsilon|) \right ] d\epsilon},
\end{align*}
which is zero if $R_i(\epsilon) = R_i(-\epsilon)$ for all quadrants since $|\epsilon|$ is an even function. Such condition is satisfied if $\alpha(X)$ and $\beta(X)$ have reflection symmetry concerning the bisector of the two lower and upper quadrants respectively as illustrated in Figure \ref{fig: obs_xxyy}(b); thus, symmetries S\{2,3\} are checked.\end{proof}

\begin{figure}
    \centering
    \includegraphics[trim={3cm 4cm 5.0cm 3.5cm}, clip, width=0.85\columnwidth]{./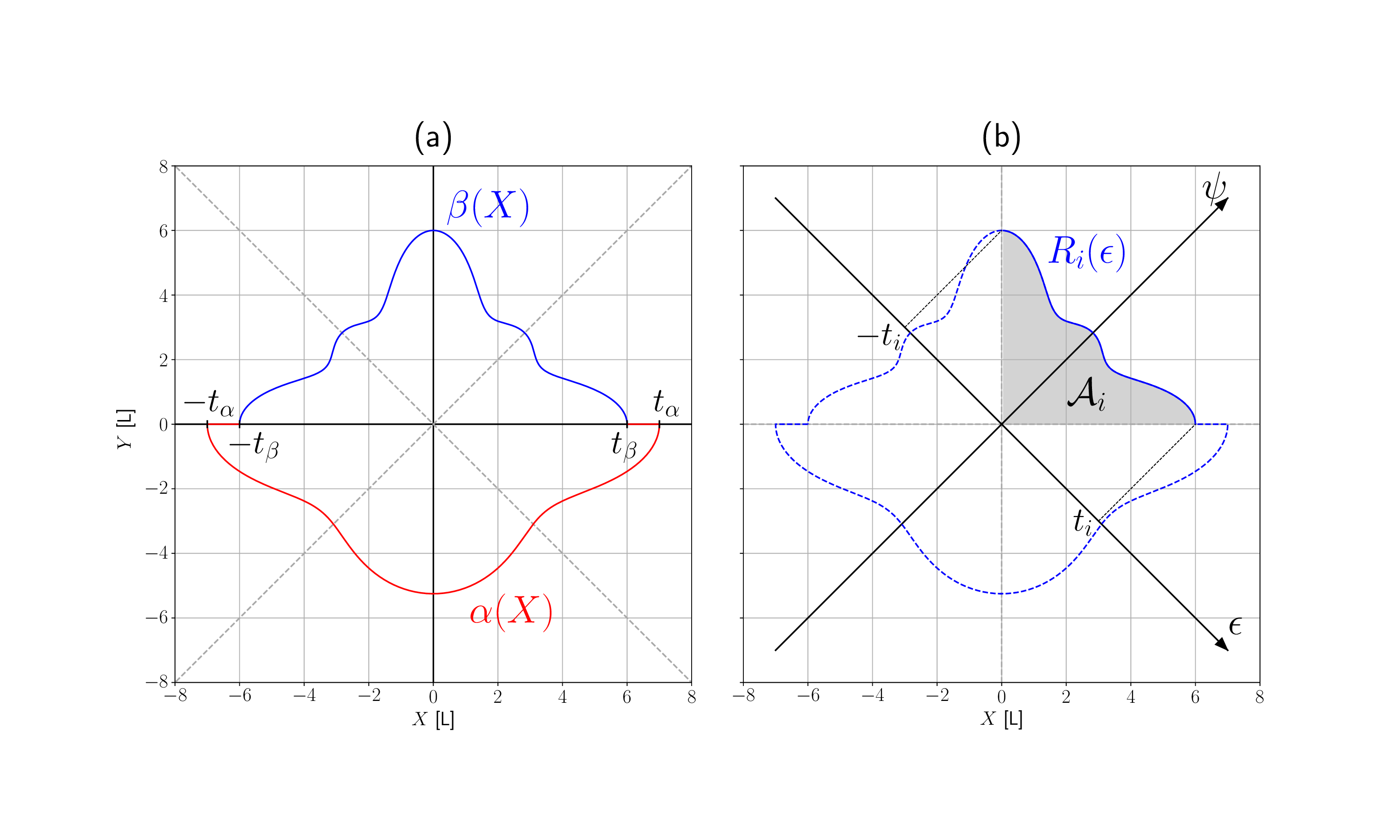}
    \caption{Illustration of the symmetries S\{1,3\} for the surface $\mathcal{A}$ on the left and right respectively, to design $L^1_\sigma(p_c, x)$ parallel to the gradient $\nabla\sigma(p_c)$.}
    \label{fig: obs_xxyy}
\end{figure}

\begin{figure*}[t!]
\centering
\includegraphics[trim={1cm 1.4cm 0cm 1.1cm}, clip, width=1.6\columnwidth]{./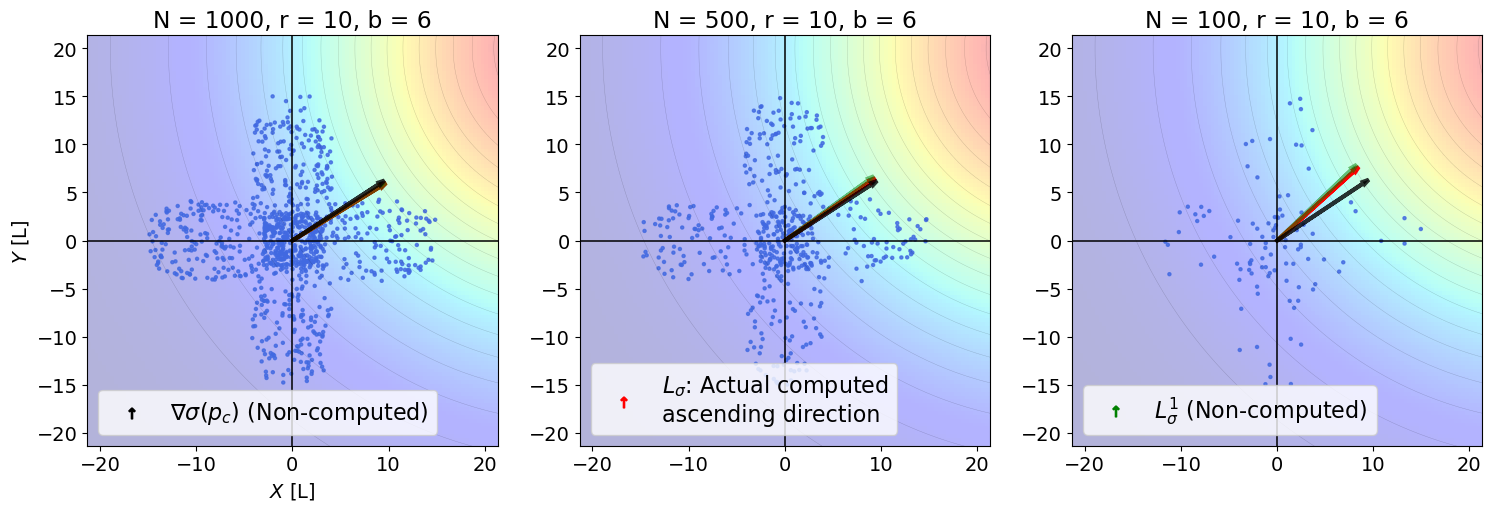}
\caption{On the left, a swarm of $1000$ robots satisfying the symmetries $S\{0,1,2,3\}$ from Proposition \ref{pro: U}. For the three figures, the black arrows are the gradient $\nabla\sigma$, the green arrows are $L^1_\sigma$, and the red arrows are the direction $L_\sigma$ computed by the robots. Arrows are normalized for representation purposes as we focus only on their direction. As we get far from \emph{the continuum}, with $500$ and $100$ robots on the middle and right figures, there is no guarantee that $L^1_\sigma$ is parallel to the gradient $\nabla\sigma$. However, in this example, $L_\sigma$ has almost the same direction as $L_\sigma^1$ despite a \emph{big} $D$ concerning the level curves.}
\label{fig: prop5}
\end{figure*}

\begin{figure*}[t!]
\centering
\includegraphics[trim={1cm 1.4cm 0cm 1.05cm}, clip, width=1.6\columnwidth]{./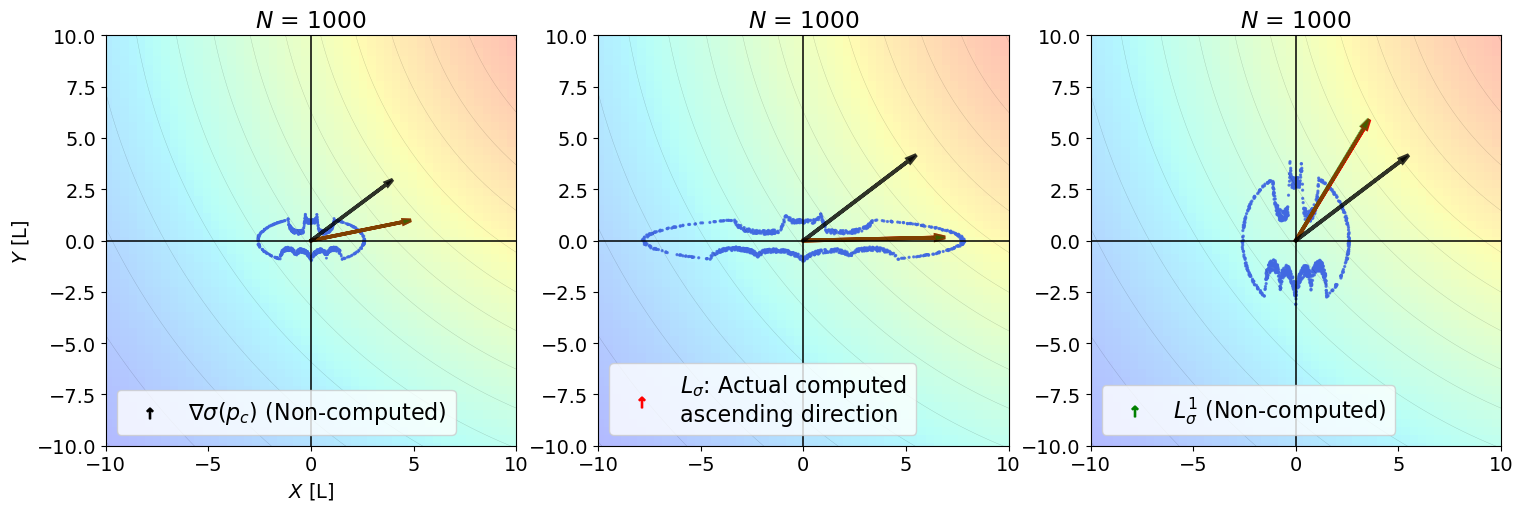}
\caption{On the left, a swarm of $1000$ robots satisfying only the symmetries $S\{0,1\}$ from Proposition \ref{pro: U}. The arrows have the same meaning as in Figure \ref{fig: prop5}. The modification of the variances of $\rho(X,Y)$, e.g., by stretching the swarm shape, stretches $L^1_\sigma$ and $L_\sigma$ accordingly. In this example, the direction of $L_\sigma$ almost does not diverge from $L_\sigma^1$ despite a \emph{big} $D$ concerning the level curves.}
\label{fig: Batman}
\end{figure*}

Indeed, not only \emph{uniform} deployments $x^{\text{Upoly}}$ describing an area/volume of a regular polygon/polyhedron fit into Proposition \ref{pro: U}, but a richer collection of deployments like the ones depicted in Figures \ref{fig: obs_xxyy} and \ref{fig: prop5}. In fact, if only $S\{0,1\}$ are satisfied, concerning $\rho(X,Y)$, the variances $\text{VAR}_{\mathcal{A}}[X] \neq \text{VAR}_{\mathcal{A}}[Y]$ in general, so we have that
\begin{equation} 
\scalemath{0.8}{
    L_\sigma^1(p_c, x)  = 
     \frac{\|\nabla\sigma(p_c)\|}{A} \begin{bmatrix}
        \text{VAR}_{\mathcal{A}}[X] \cos(\theta)\\
        \text{VAR}_{\mathcal{A}}[Y] \sin(\theta)\\
    \end{bmatrix}}, \nonumber
\end{equation}
where the relation between variances \emph{maneuvers} the swarm while getting closer to the source, as shown in Figure \ref{fig: Batman}.

\begin{figure*}[t!]
\centering
\includegraphics[trim={0cm 0cm 0cm 0cm}, clip, width=2\columnwidth]{./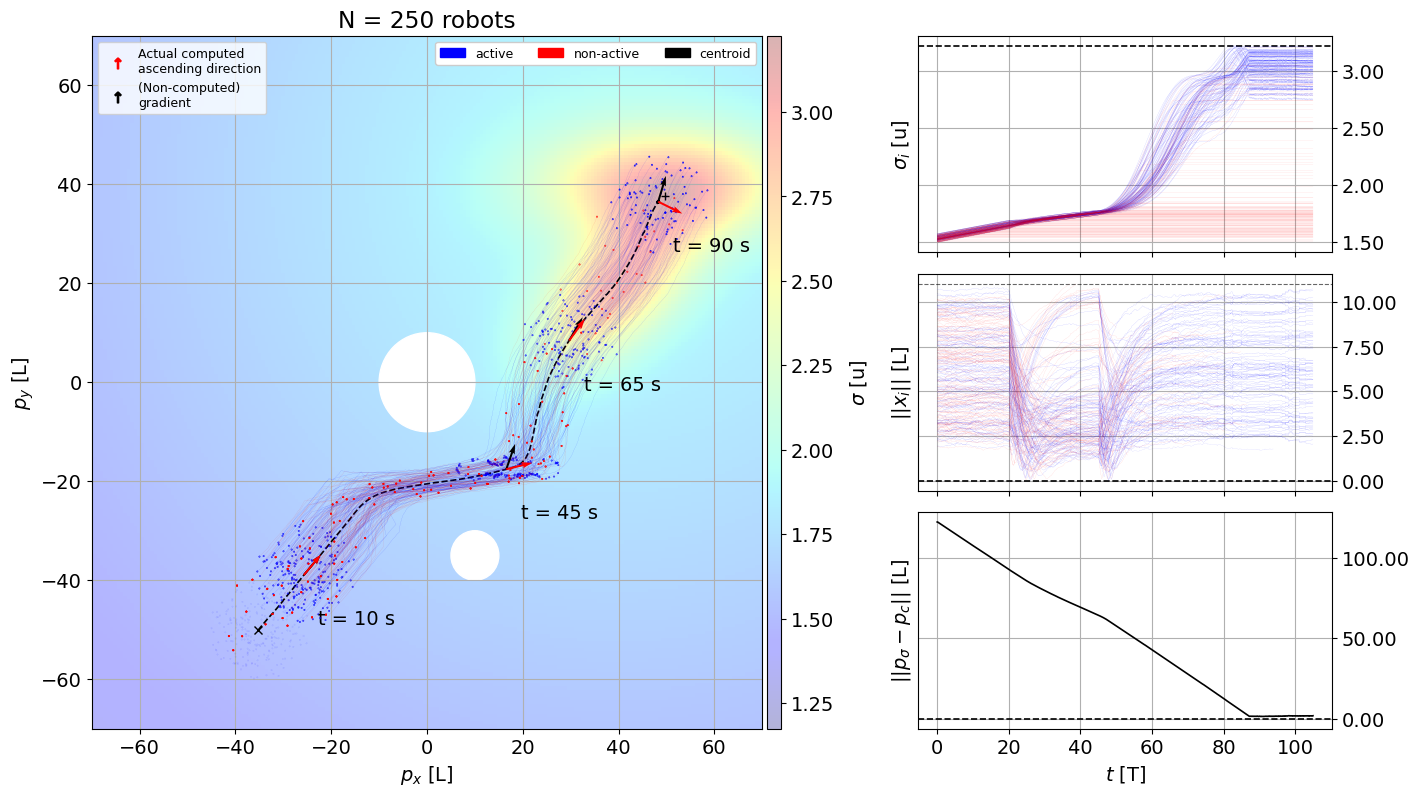}
\caption{A robot swarm of $250$ individuals (blue dots) faces faulty actuators with noisy directions, and individuals stop working randomly (turned into red dots). The swarm can maneuver by morphing its shape, e.g., to pass between the two white circles, and reaches the signal source eventually, where $170$ robots were dead during the mission; nonetheless, the rest made it to the source, demonstrating the resiliency of the swarm. On the right side, from top to down, are the readings of the signal by the robots, the distances from the robots to the swarm centroid, and the distance of the swarm centroid to the source.
}
\label{fig: sim}
\end{figure*}

\section{source seeking with single-integrator robots.}
\label{sec: sianalysis}

Focusing on robots with the single-integrator dynamics (\ref{eq: sid}), the following theorem claims that $L_\sigma(p_c(t), x)$ can guide the centroid of the robot swarm effectively to the signal source.

\begin{theorem}
\label{th: si}
Consider the signal $\sigma$ and a swarm of $N$ robots with the stacked positions $p(t)\in\mathbb{R}^{mN}$, where $m=\{2,3\}$. Assume that the initial deployment $x^*$ is non-degenerate. Then
\begin{equation}
\begin{cases}
\dot p_i(t) &= \frac{L_\sigma(p_c(t), x(t))}{\|L_\sigma(p_c(t), x(t))\|} \\
p_i(0) &= p_c + x^*_i, \quad p_c \neq p_\sigma
\end{cases}, \forall i\in\{1,\dots,N\},
	\label{eq: th}
\end{equation}
is a solution to Problem \ref{prob: ss} given the conditions of Proposition \ref{pro: S} for an annulus shell $\mathcal{S}$ around $p_\sigma$.
\end{theorem}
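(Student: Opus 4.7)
My plan is to treat $\sigma(p_c(t))$ as a Lyapunov-like function and show it strictly increases at a uniform positive rate while the centroid lies in the annulus $\mathcal{S}$ around $p_\sigma$. The first observation is that under (\ref{eq: th}) every robot has the \emph{same} velocity $L_\sigma/\|L_\sigma\|$, so $\dot x_i = \dot p_i - \dot p_c \equiv 0$ and the deployment is frozen: $x(t) \equiv x^*$, so $D$ and $P(x)$ stay constant and non-degeneracy is preserved for all time. The centroid then obeys $\dot p_c = L_\sigma(p_c, x^*)/\|L_\sigma(p_c, x^*)\|$, i.e.\ unit-speed motion along the computed ascending direction, and Proposition \ref{pro: S} continues to apply along the whole trajectory.

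Next, I would fix the annulus as $\mathcal{S} = \{a : \epsilon \leq \|a - p_\sigma\| \leq R\}$ with $R$ chosen so that (i) $p_c(0) \in \mathcal{S}$ and (ii) $\sigma(a) < \sigma(p_c(0))$ for every $a$ with $\|a-p_\sigma\| > R$; (ii) is feasible because $\sigma \to 0$ at infinity while $\sigma(p_c(0)) > 0$. Writing $\mu := \frac{\lambda_{\min}\{P(x^*)\}}{ND^2} K_\mathcal{S}^{\min} - M_\mathcal{S} D > 0$, the same chain of inequalities as in the proof of Proposition \ref{pro: S} yields $\nabla\sigma(p_c)^T L_\sigma \geq K_\mathcal{S}^{\min}\mu$ on $\mathcal{S}$. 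Pairing this with the upper bound $\|L_\sigma\| \leq \|L_\sigma^1\| + M_\mathcal{S} D \leq K_\mathcal{S}^{\max} + M_\mathcal{S} D =: \Lambda$ (using Lemma \ref{lem: ll1} and the crude estimate $\|L_\sigma^1\| \leq \|\nabla\sigma(p_c)\|$ that follows from $\|x_i\| \leq D$) gives the uniform rate
\begin{equation}
\dt\sigma(p_c(t)) = \nabla\sigma(p_c)^T\frac{L_\sigma}{\|L_\sigma\|} \geq \frac{K_\mathcal{S}^{\min}\mu}{\Lambda} =: c > 0 \nonumber
\end{equation}
valid whenever $p_c(t) \in \mathcal{S}$.

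Since $\sigma \leq \sigma(p_\sigma)$, this caps the total Lebesgue measure of times with $p_c(t) \in \mathcal{S}$ at $T := (\sigma(p_\sigma) - \sigma(p_c(0)))/c$. Condition (ii) combined with the monotonicity $\dot\sigma(p_c)>0$ on $\mathcal{S}$ forbids any exit through the outer sphere $\|p_c - p_\sigma\| = R$, so every exit from $\mathcal{S}$ must occur through the inner sphere $\|p_c - p_\sigma\| = \epsilon$; after time $T$ the trajectory is therefore trapped in the $\epsilon$-ball, since any subsequent re-entry would consume additional positive time in $\mathcal{S}$ and eventually violate the budget $T$. The main obstacle is exactly this confinement step: the unit-speed dynamics do not \emph{a priori} prevent the centroid from sliding back across the inner sphere, and it is only the combination of monotonicity of $\sigma(p_c(t))$ with the uniform lower bound $c$ that rules out chattering and closes the argument.
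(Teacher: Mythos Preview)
Your approach differs substantively from the paper's. The paper first studies the auxiliary dynamics $\dot p_i = L_\sigma^1(p_c,x)$ via the Lyapunov function $V=\tfrac12(\sigma(p_c)-\sigma(p_\sigma))^2$ and LaSalle's invariance principle, obtaining $p_c(t)\to p_\sigma$, and then asserts that ``the same arguments'' carry over to the normalised $L_\sigma$ dynamics on the annulus once Proposition~\ref{pro: S} guarantees $L_\sigma$ is ascending there. You bypass the auxiliary system entirely, make the key identity $x(t)\equiv x^*$ explicit (the paper uses it without comment), and extract a uniform quantitative rate $\dot\sigma(p_c)\ge c>0$ on $\mathcal S$, which yields an explicit bound on the first hitting time of the inner sphere. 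This is more elementary and more informative than the LaSalle route.

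The gap is in your time-budget step. The claimed cap on the total time in $\mathcal S$ would follow from $\dot\sigma(p_c)\ge c$ on $\mathcal S$ only if $\sigma(p_c(t))$ were globally non-decreasing; but inside the open $\epsilon$-ball Proposition~\ref{pro: S} no longer applies, $\nabla\sigma(p_c)^TL_\sigma$ can be negative, and $\sigma(p_c)$ can drop. A re-entry into $\mathcal S$ may therefore occur at a strictly smaller value of $\sigma$ than the preceding exit, so the ``budget'' is replenished and the total sojourn time in $\mathcal S$ is not bounded by your $T$. The same issue undermines your outer-sphere argument for later excursions: condition~(ii) compares $\sigma$ on the outer sphere with $\sigma(p_c(0))$, not with the (possibly smaller) re-entry value. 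The paper's proof has the same lacuna---it dismisses re-entry in one informal sentence---so you are not missing an idea that the paper supplies; a genuine fix (for instance, replacing the metric annulus by a region between two super-level sets of $\sigma$, on which monotonicity of $\sigma(p_c)$ really does trap the trajectory) is needed in both.
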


\begin{proof}
We first consider the dynamics of the following system
\begin{equation}
\begin{cases}
\dot p_i(t) &= L^1_\sigma(p_c(t), x(t)) \\
p_i(0) &= p_c + x^*_i, \quad p_c \neq p_\sigma
\end{cases}, \forall i\in\{1,\dots,N\},
\label{eq: th1}
\end{equation}
	and then we continue to analyse the system (\ref{eq: th}) in the statement. Let us consider the function $\sigma(p_c(t))$ with a positive maximum at $p_\sigma$, and without loss of generality, let us set $p_\sigma$ at the origin so that $\sigma(p) \to \operatorname{sup}\{\sigma\}=\sigma(0)$ as $\|p\|\to 0$, and $\sigma(p) \to 0$ as $\|p\|\to\infty$. Because of Definition \ref{signal}, $\sigma(p)$ does not have any local maxima or minima, but only the global maximum $\operatorname{sup}\{\sigma\}=\sigma(0)$. The dynamics of $\sigma(p_c(t))$ under (\ref{eq: th1}) is: 
\begin{align}
\dot\sigma(p_c(t)) = \nabla\sigma(p_c)^T \dot p_c(t) = \nabla\sigma(p_c)^T L^1_\sigma(p_c(t), x(t)) \geq 0, \nonumber
\end{align}
since $x(t)$ is not degenerated for all $t$, then the equality holds if and only if $p_c = p_\sigma$ because of Lemma \ref{le: l1}.
%



Let us consider the Lyapunov function $V = (\sigma(p_c(t)) - \sigma(0))^2/2$, and calculating the time derivative with respect to \eqref{eq: th1}, we have
\begin{align*}
\dot{V}(t) &= (\sigma(p_c(t)) - \sigma(0)) \nabla \sigma^T(p_c(t)) \dot p_c(t) \\ 
&= \underbrace{(\sigma(p_c(t)) - \sigma(0)) \nabla \sigma^T(p_c(t)) L^1_\sigma(p_c(t), x(t))}_{W(t)} \leq 0.
\end{align*}
At this point, we can invoke LaSalle's invariance principle to conclude that $W(t) \to 0$ as $t \to \infty$; or equivalently $\lim_{t\to\infty} p_c(t) = p_\sigma=0$ since we recall that $x(t)$ is not degenerate. Therefore, given an $\epsilon > 0$, there is always a time $T^* > 0$, such that $\|p_c(t) - p_\sigma\| < \epsilon, \forall t > T^*$, as required to solve the source-seeking Problem \ref{prob: ss}. 

	Let us focus now on the system (\ref{eq: th}). We first know that $\frac{L_\sigma(p_c(t), x(t))}{\|L_\sigma(p_c(t), x(t))\|}$ is well defined in $\mathcal{S}$ if $x(t)$ is not degenerate. Let us consider the annulus shell $\mathcal{S}$ centered at $p_\sigma$ and with radii $\epsilon^+ > \epsilon_- > 0$, then, according to Proposition \ref{pro: S}, there exists $D^*(\epsilon^+,\epsilon_-)$ such that if $D \leq D^*$ the trajectory of $p_c(t)$ under the dynamics (\ref{eq: th}) is an ascending one so that we can apply the same arguments as for the system (\ref{eq: th}) and solve the source-seeking Problem \ref{prob: ss} where its $\epsilon = \epsilon_-$ and the initial condition must satisfy $||p_c(0)|| \leq \epsilon^+$. For all $t>T^*$ such that $\|p_c(T^*) - p_\sigma\| < \epsilon_-$, we note that since the robots have single integrator dynamics, if the centroid of the swarm comes back to the compact set $\mathcal{S}$ it will only stay at its closure and leave it again since $L_\sigma$ is an ascending direction at the closure of $\mathcal{S}$. If $L_\sigma$ turns out to be zero outside of $\mathcal{S}$ then the swarm can stop.

\end{proof}

\section{Numerical simulation}
\label{sec: sce}

Figure \ref{fig: sim} shows a $250$ robot swarm seeking the source of the non-convex signal,
\begin{equation*}
\begin{split}
\sigma(p)=& 2 - 0.04\left \Vert (p - p_\sigma) \right \Vert +\exp\left \{-0.9(p-p_\sigma)^TA_1u_X\right \}\\
+&\exp \left \{0.9(p-p_\sigma)^TS^TA_2Su_Y \right \},
\end{split}
\end{equation*}
with
$u_X = \left[\begin{smallmatrix}
1\\0
\end{smallmatrix}\right]$, 
$u_Y = \left[\begin{smallmatrix}
0\\1
\end{smallmatrix}\right]$, 
$p_\sigma = 40(u_X+u_Y)$, $A_1 = \left[\begin{smallmatrix}
\frac{1}{\sqrt{30}} & 0\\1 & 0 
\end{smallmatrix}\right]$, 
$A_2 = \left[\begin{smallmatrix}
1 & 0\\1 & \frac{1}{\sqrt{15}}
\end{smallmatrix}\right]$ and $S = \left[\begin{smallmatrix}
1 & -1\\ 1 & \ \ 1 
\end{smallmatrix}\right]$, 
  by using the direction $L_\sigma$ (red arrow in Figure \ref{fig: sim}) in Theorem \ref{th: si}, i.e., robots move at an unitary speed. The swarm starts as a uniform circular cloud of radius $10$ length units. We show the resilience of the swarm by introducing the following three factors. Firstly, in order to show \emph{misplacements} within the formation, we inject noise to the actuators, i.e., every $0.2$ time units the robots track the direction $L_\sigma$ with a random deviation within $\pm 10$ degrees. Secondly, at time $20$, the swarm maneuvers between two obstacles by morphing into the shape in Figure \ref{fig: Batman} since it stretches $L_\sigma$ horizontally. Thirdly, there is a random probability for individual robots to stop functioning. These \emph{dead} robots are represented in red color, and only $80$ robots made it eventually to the source. At time $90$, the centroid swarm gets so close to the source that $L_\sigma$ becomes unreliable, i.e., the centroid entered the $\epsilon$-ball in Problem \ref{prob: ss}.

\section{Conclusions and future work}
\label{sec: con}
Using a robot swarm that always calculates an ascending direction regardless of the shape of its spatial deployment offsets individual losses and mitigates noise issues for the source-seeking problem, i.e., it adds resiliency. Our next steps include a more distributed computation of the ascending direction $L_\sigma$, including a fully distributed algorithm to estimate the centroid estimation, which is compatible with our source-seeking method, and we will analyze how our algorithm works with more restrictive agents' dynamics such as unicycles traveling at constant speeds.

\bibliographystyle{IEEEtran}
\bibliography{biblio}

\end{document}